\newcommand{\ours}{MDT-dist}
\newtheorem{theorem}{Theorem}
\renewcommand{\thefootnote}{\fnsymbol{footnote}}
\definecolor{cvprblue}{rgb}{0.21,0.49,0.74}
\title{Few-step Flow for 3D Generation via Marginal-Data Transport Distillation}
\author{
	Zanwei Zhou$^{1,*}$\quad Taoran Yi$^{2,*}$\quad Jiemin Fang$^{3,\dagger}$\quad Chen Yang$^{3}$\quad Lingxi Xie$^{3}$\\ Xinggang Wang$^{2}$\quad Wei Shen$^{1,\dagger}$\quad Qi Tian$^{3}$\vspace{8pt}\\
	$^{1}$Shanghai Jiao Tong University\quad $^{2}$Huazhong University of Science and Technology\quad $^{3}$Huawei Inc.\\
    \texttt{\small\{sjtu19zzw, wei.shen\}@sjtu.edu.cn}\;\;
    \texttt{\small\{taoranyi, xgwang\}@hust.edu.cn}\\  
    \texttt{\small \{jaminfong, chenyang.res, 198808xc\}@gmail.com} \;\;
    \texttt{\small tian.qi1@huawei.com} \\
    \url{https://github.com/Zanue/MDT-dist}
}
\begin{document}
\maketitle

\let\thefootnote\relax\footnote{$^*$ Equal contribution. Work done during internship at Huawei.}
\let\thefootnote\relax\footnote{$^\dagger$ Corresponding author.}

\begin{abstract}
Flow-based 3D generation models typically require dozens of sampling steps during inference. 
Though few-step distillation methods, particularly Consistency Models (CMs), have achieved substantial advancements in accelerating 2D diffusion models, they remain under-explored for more complex 3D generation tasks. 
In this study, we propose a novel framework, \textbf{MDT-dist}, for few-step 3D flow distillation. 
Our approach is built upon a primary objective: \underline{\textbf{dist}}illing the pretrained model to learn the \underline{\textbf{M}}arginal-\underline{\textbf{D}}ata \underline{\textbf{T}}ransport. 
Directly learning this objective needs to integrate the velocity fields, while this integral is intractable to be implemented. Therefore, we propose two optimizable objectives, Velocity Matching (VM) and Velocity Distillation (VD), to equivalently convert the optimization target from the transport level to the velocity and the distribution level respectively. 
Velocity Matching (VM) learns to stably match the velocity fields between the student and the teacher, but inevitably provides biased gradient estimates. 
Velocity Distillation (VD) further enhances the optimization process by leveraging the learned velocity fields to perform probability density distillation.
When evaluated on the pioneer 3D generation framework TRELLIS, our method reduces sampling steps of each flow transformer from 25 to 1–2, achieving 0.68s (1 step $\times$ 2) and 0.94s (2 steps $\times$ 2) latency with 9.0$\times$ and 6.5$\times$ speedup on A800, while preserving high visual and geometric fidelity. 
Extensive experiments demonstrate that our method significantly outperforms existing CM distillation methods, and enables TRELLIS to achieve superior performance in few-step 3D generation. 
\end{abstract}

\section{Introduction}
\label{sec:intro}

Flow-based 3D generation models~\cite{trellis,zhao2025hunyuan3d2,yang2024hunyuan3d1,li2025step1x,wu2024direct3d,wu2025direct3ds2,chen2025dora,ye2025hi3dgen,zhang2024clay} have exhibited remarkable abilities in synthesizing intricate 3D representations from image prompts. However, during inference they typically require dozens of iterative sampling steps, posing significant computational barriers to practical applications such as large-scale 3D content generation for embodied intelligence simulation~\cite{wang2025embodiedgengenerative3dworld} and real-time interactive editing workflows in graphics systems. 
Although few-step diffusion distillation methods, particularly consistency models (CMs)~\cite{song2023consistency, wang2024phased, lu2024simplifying}, have achieved substantial advancements in accelerating 2D diffusion models, their extension to the 3D generation area remains underexplored. The only related work recently is FlashVDM~\cite{lai2025flashvdm}, which adopts a few-step distillation framework mainly derived from the previous Phased Consistency Models (PCM) method~\cite{wang2024phased}.

3D generation presents inherently greater challenges than its 2D counterpart. Unlike 2D images sampled from a continuous color space, 3D representations, \eg, meshes and 3D Gaussians~\cite{kerbl3Dgaussians}, are discrete and sparsely structured in 3D space. 3D models also contain richer geometric and textural details at a finer granularity. Moreover, in latent-space generative frameworks such as Latent Diffusion models (LDM)~\cite{rombach2022high}, the dimension of the 3D latent space is typically higher than that of the 2D latent space. These fundamental differences indicate that 3D generation faces more difficulties and challenges than 2D generation, and thus has stricter requirements on few-step acceleration techniques.

To address these challenges, we propose a novel framework, MDT-dist, for few-step 3D flow distillation.
Our method is built upon a primary objective: distilling the pretrained 3D flow model to learn the marginal-data transport. CMs have a similar optimization target, but are limited by the consistency constraint which enforces consistency between adjacent time steps to indirectly learn the target. We instead propose two novel loss functions, Velocity Matching (VM) and Velocity Distillation (VD), to pursue the primary objective in a more direct way. Directly learning the primary objective needs to integrate the velocity fields, but this integral is intractable to be implemented. Therefore, VM and VD equivalently convert it to tractable objectives respectively. 
In VM, the optimization of the primary objective is converted into optimizing its time derivative, with the error in the primary objective bounded by the error in the VM loss. Specifically, VM learns to stably match the velocity fields between the student and the teacher. 
However, it inevitably contains a term involving the derivative of the network output. This term cannot be efficiently back-propagated and is therefore detached, leading to biased gradient estimates.
VD further enhances the learning of the marginal-data transport by matching the marginal distributions between the student and the teacher. It leverages the velocity fields learned by the student and the teacher as measure to perform probability density distillation.
We evaluate our methods on a state-of-the-art 3D generation framework TRELLIS~\cite{trellis}. Our approach reduces the inference steps of each flow transformer from 25 to just 1-2 and the latency from 6.1s to 0.68s and 0.94s on A800, while preserving high visual and geometric fidelity. Extensive experiments demonstrate that our method significantly outperforms existing CM distillation methods, and makes the distilled TRELLIS model surpass FlashVDM~\cite{lai2025flashvdm}, enabling fast 3D content generation beneficial for various downstream tasks.

Our contributions are as follows:
\begin{itemize}
    \item We develop a novel few-step flow distillation framework MDT-dist for better 3D generation acceleration.
    \item We propose two novel optimization objectives, Velocity Matching (VM) and Velocity Distillation (VD), to jointly enable effective few-step distillation.
    \item We distilled TRELLIS to achieve a sweet balance between generation speed and quality. Our method reduces sampling steps of each flow transformer from 25 to 1–2, achieving 0.68s (1 step $\times$ 2) and 0.94s (2 steps $\times$ 2) latency with 9.0$\times$ and 6.5$\times$ speedup on A800.
\end{itemize}

\section{Related Work}
\paragraph{3D Generation Models.}
Early 3D generation methods~\cite{jain2022zero,michel2022text2mesh,lei2022tango,lin2023magic3d,chen2023fantasia3d,shi2023mvdream,armandpour2023re,liu2023syncdreamer,seo2023ditto,seo2023let,wu2023hd,tsalicoglou2023textmesh,raj2023dreambooth3d,huang2023tech,ouyang2023chasing,metzer2023latent,cao2023dreamavatar,huang2023dreamwaltz,zhang2023avatarverse,jiang2023avatarcraft,zhang2023dreamface,song2023roomdreamer,liu2023zero1to3,long2023wonder3d} are mainly based on 2D diffusion models~\cite{rombach2022high}, generating 3D assets by iteratively prompting 2D diffusion models to optimize 3D representations~\cite{mildenhall2020nerf,kerbl3Dgaussians}. DreamFusion~\cite{poole2022dreamfusion} and Score Jacobian Chaining (SJC)~\cite{song2020score} first introduce Score Distillation Sampling (SDS) to generate 3D assets using pretrained 2D diffusion models. ProlificDreamer~\cite{wang2023prolificdreamer} and other methods~\cite{EnVision2023luciddreamer,sun2023dreamcraft3d,zhao2023efficientdreamer} further improve SDS to achieve better generation results.
Some methods~\cite{ma2023geodream,qiu2024richdreamer,chen2023gsgen,yi2023gaussiandreamer,tang2023dreamgaussian,yi2024gaussiandreamerpro,li2023sweetdreamer} incorporate shape priors to significantly reduce generation time.
Methods~\cite{jun2023shap,nichol2022point,gupta3dgen,gao2022get3d,luo2023scalable,shen2024gamba,li2023instant3d,hong2023lrm,tang2024lgm,zou2023triplane,xu2024grm,gslrm2024} like LRM~\cite{hong2023lrm} and LGM~\cite{tang2024lgm} build native 3D generative models by pretraining on large-scale 3D data, enabling feed-forward generation of 3D assets without optimization. Some native 3D generation methods~\cite{trellis,zhao2025hunyuan3d2,yang2024hunyuan3d1,li2025step1x,wu2024direct3d,wu2025direct3ds2,3dtopiaxl,chen2025sar3d,chen2025dora,ye2025hi3dgen,chen2025mar,zhang2024clay,li2024craftsman} further introduce flow matching into the 3D generation field.
However, generating high-fidelity 3D assets with 3D diffusion models requires a relatively large number of sampling steps during inference, which both increases users' queuing time and raises computational costs. FlashVDM~\cite{lai2025flashvdm} shortens the generation time through efficient decoder design and few-step distillation, but it can only generate the shape without appearance. We apply our method to TRELLIS~\cite{trellis} to reduce the number of sampling steps for the two stage flow transformers, enabling the fast generation on both shape and appearance.

\paragraph{2D Generative Models.} 

2D generative modeling has progressed from variational autoencoders (VAEs)~\cite{kingma2013auto,rezende2014stochastic} to generative adversarial networks (GANs)~\cite{goodfellow2020generative}. VAEs map data distributions to latent Gaussian spaces via Evidence Lower Bound (ELBO) optimization but often yield blurry outputs. GANs employ adversarial training to produce high-fidelity images but suffer from instability and mode collapse. 
Diffusion models have since become a dominant paradigm: denoising diffusion probabilistic models (DDPM)~\cite{ho2020denoising} formalize iterative denoising as a Markov process; score-based generative modeling~\cite{song2020score} unifies diffusion under the framework of stochastic differential equations (SDEs); flow matching~\cite{lipman2022flow, liu2022flow, tong2023conditional} learns velocity fields for direct ODE-based generation and often requires fewer steps than diffusion.
An important topic on diffusion models is acceleration by reducing the number of sample steps. Denoising diffusion implicit models (DDIM)~\cite{song2020denoising} accelerate generation through non-Markovian updates. DPM-Solver~\cite{lu2022dpm} further reduces sampling steps by employing higher-order ODE solvers. Consistency models (CMs)~\cite{song2023consistency} enable single-step sampling by enforcing consistency across sampling trajectories. Variants such as phased consistency models (PCM)~\cite{wang2024phased} and Trigflow(sCM)~\cite{lu2024simplifying} improve stability through phased training strategies and continuous-time formulations, respectively. In contrast to consistency models, our method is derived without the consistency constraint and serves as a novel few-step distillation framework.

\section{Background}

\subsection{Diffusion Models}
Diffusion models~\cite{ho2020denoising} learn to generate data by iteratively denoising samples from Gaussian noise distribution. The framework defines a fixed forward diffusion process and a learned reverse denoising process. Given a data sample $\boldsymbol x_0 \sim q_{\text{data}}$, the forward diffusion process gradually adds the Gaussian noise and produces a series of noised samples $\{\boldsymbol x_t \}_{t=1}^T$, conditioned on the time step $t$. This induces a sequence of marginal distributions $q_t(\boldsymbol x_t)$, \ie, the noised distribution at time $t$. The reverse process is parameterized by a neural network with learnable parameters $\theta$, which learns to generate the denoising direction. During the reverse process, the generated marginal distribution $p_{\theta}^t(\boldsymbol x_t)$ is expected to be matched with $q_t(\boldsymbol x_t)$.

\paragraph{Flow Matching.}
Flow matching (FM)~\cite{lipman2022flow, liu2022flow, tong2023conditional} learns to map noise to data distribution by estimating a Probability Flow Ordinary Differential Equation (PF-ODE) process. It defines a continuous-time dynamical system with a learnable velocity field $\boldsymbol v_\theta(\cdot, t), t \in [0, 1]$, which can be used to construct a time-dependent diffeomorphic map $\boldsymbol \phi_t$,
\begin{equation}\begin{aligned}
\frac{\mathrm d}{\mathrm d t}\boldsymbol \phi_t(\boldsymbol x)&=\boldsymbol v_{\theta}(\boldsymbol \phi_t(\boldsymbol x), t),\\ 
\boldsymbol \phi_0(\boldsymbol x)&= \boldsymbol x,
\end{aligned}\end{equation}
which subsequently defines a push-forward $\boldsymbol \phi_{*}$ transforming a density over time
\begin{equation}
p_{\theta}^t(\boldsymbol x)=[\boldsymbol \phi_t]_{*}p_0(\boldsymbol x)=p_0(\boldsymbol \phi^{-1}(\boldsymbol x))\left|\det\nabla_{\boldsymbol x}\boldsymbol \phi_t^{-1}(\boldsymbol x)\right|.
\end{equation}
In this way, the velocity field $\boldsymbol v_\theta(\cdot, t)$ is said to generate a probability path $ p_{\theta}^t$.
The velocity field is optimized by minimizing the conditional flow matching loss
\begin{equation}
\mathcal{L}_{\text{CFM}}(\theta) = \mathbb{E}_{t, \boldsymbol x_0, \boldsymbol z} \left[ \left\| \boldsymbol v_\theta\left((1-t)\boldsymbol x_0 + t\boldsymbol z, t\right) - (\boldsymbol z - \boldsymbol x_0)\right\|^2 \right].
\end{equation}
At inference, samples are generated from a Gaussian noise sample $\boldsymbol{z} \sim \mathcal{N}(\boldsymbol 0, \boldsymbol{I})$ by solving the ODE backward in time:  
\begin{equation}
\boldsymbol x_0 = \boldsymbol z - \int_{0}^1 \boldsymbol v_{\theta}(\boldsymbol x_{\tau}, \tau) \mathrm d \tau.
\end{equation}
Notably, the integral of $\boldsymbol v_{\theta}(\boldsymbol x_{t}, t)$ on a time interval $[t_1, t_2]$ indeed describes the transport from $p^t_{\theta}(\boldsymbol x_{t_2})$ to $p^t_{\theta}(\boldsymbol x_{t_1})$:
\begin{equation}
\boldsymbol x_{t_1} = \boldsymbol x_{t_2} - \int_{t_1}^{t_2} \boldsymbol v_{\theta}(\boldsymbol x_{\tau}, \tau) \mathrm d \tau.\label{eq:flow_transition}
\end{equation}
In practice, the ODE is discretely approximated using a numerical ODE solver such as Euler's method,
\begin{equation}
\boldsymbol x_{t_1} = \boldsymbol x_{t_2} -  \sum_{k = 1}^{N} \boldsymbol v_{\theta}(\boldsymbol x_{\tau_k}, \tau_k) \Delta \tau,
\end{equation}
where $t_1 = \tau_{1} < \tau_2 < \dots < \tau_N = t_2$, $\boldsymbol x_{\tau_{k-1}} = \boldsymbol x_{t_{\tau_{k}}} - \boldsymbol v_{\theta}(\boldsymbol x_{t_{\tau_{k}}}, \tau_{k}) \Delta \tau$, and $\Delta \tau = \tau_{k} - \tau_{k-1}$.

\subsection{3D Generation Models}
Our method builds upon TRELLIS~\cite{trellis}, a recent high-quality 3D asset generation framework. In TRELLIS, a 3D asset is implicitly represented by a structured latent variable (SLAT) $\boldsymbol{S}$, which is composed of sparse voxels and features:
\begin{equation}
	\label{eq: slat}
  \boldsymbol{S} = \{(f_i,\boldsymbol{p}_i)\}_{i=1}^{L},\quad f_i\in\mathbb{R}^C, \ \boldsymbol{p}_i\in \{0, 1,\ldots, N-1\} ^3, 
\end{equation}
where $\boldsymbol{p}_i$ is the coordinate of the $i$-th voxel, and $f_i$ is the corresponding feature. $C$ denotes the feature dimension, $N$ denotes the voxel resolution, and $L$ denotes the number of active voxels which is much smaller than $N^3$.
SLAT can be decoded into different 3D representations such as 3D Gaussians~\cite{kerbl3Dgaussians}, meshes and NeRFs~\cite{mildenhall2020nerf}. For generation, two flow transformers are trained to generate coordinates and features of SLAT separately. During inference, the two models both take 25 sampling steps by default.

\begin{figure*}[thbp]
\centering
\includegraphics[width=1\textwidth]{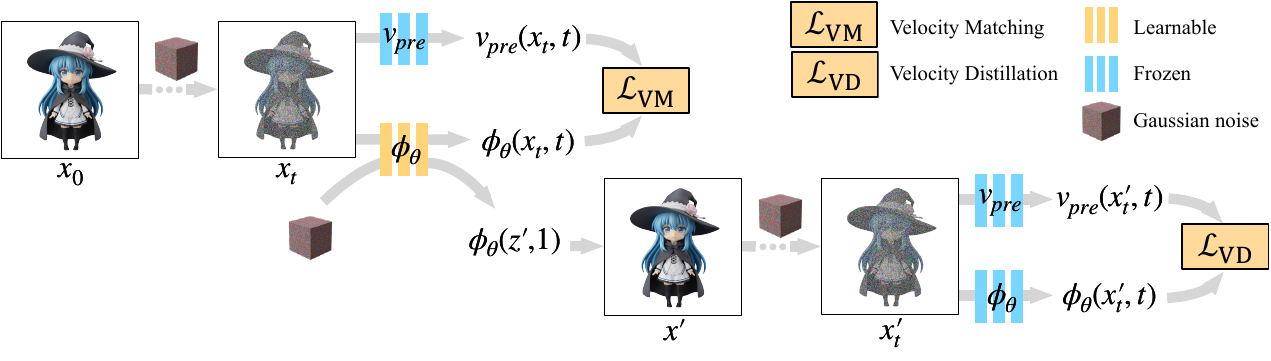}
\vspace{-20pt}
\caption{The primary objective of our framework is to learn the transport from the marginal distribution to the data distribution. Based on it we propose two optimization objectives: velocity matching and velocity distillation. Velocity matching directly supervises $\boldsymbol{\phi}_{\theta}$ via matching the velocity fields between the student and teacher (Eq.~\ref{eq:loss_func1}), while velocity distillation indirectly supervises $\boldsymbol{\phi}_{\theta}$ via matching the marginal distributions between the student and teacher (Eq.~\ref{eq:loss_func2}).}
\vspace{-5pt}
\label{fig: framework}
\end{figure*}

\section{Marginal-Data Transport Distillation}
Our target is to distill a pretrained 3D flow model into a few-step generator under the guidance of a pretrained teacher model. Unlike consistency models limited by the consistency constraint, we instead seek a more direct objective.

A straightforward objective is to directly learn the one-step mapping from the noise distribution to the data distribution. However, it has been validated by VAEs~\cite{kingma2013auto} and GANs~\cite{goodfellow2020generative} that directly associating the noise distribution to the data distribution faces unstable optimization issues such as posterior collapse~\cite{lucas2019don} and mode collapse~\cite{che2017mode, srivastava2017veegan}. Inspired by the insight of iterative denoising in diffusion models~\cite{ho2020denoising}, we propose to enhance the objective by learning the mapping from any marginal distribution $q_t(\boldsymbol x_t)$ to the data distribution $q_{\text{data}}$.

Specifically, in Sec.~\ref{sec:objective} our primary objective is set to be learning the marginal-data transport which directly transports the marginal distribution $q_t(\boldsymbol x_t)$ to the data distribution $q_{\text{data}}$. Compared with the objective from VAEs and GANs, our primary objective ensures a progressive learning of the noise-data transport, making the optimization process more stable. Unfortunately, simulating this mapping function needs to integrate the velocity fields which is intractable and cannot be directly optimized.
Therefore, starting from the primary objective, we develop two optimizable objectives: velocity matching (Sec.~\ref{sec:vm}) and velocity distillation (Sec.~\ref{sec:vd}).
These two objectives equivalently convert the optimization target from the transport level to the velocity level and the distribution level respectively. Velocity matching learns to stably match the velocity fields between the student and the teacher, and velocity distillation further enhances the optimization process by leveraging the learned velocity fields to perform probability density distillation. We combine these two objectives into a final loss function in Sec.~\ref{sec:final_loss}. We also clarify the differences between our approach and previous methods in Sec.~\ref{sec:relation_prev}.
Our framework is presented in Fig.~\ref{fig: framework}.

\subsection{Primary Objective}\label{sec:objective}
Given a noise sample $\boldsymbol z \sim \mathcal{N}(\boldsymbol 0, \boldsymbol I)$, our target is to learn a neural network $\boldsymbol \phi_{\theta}$ to generate $\boldsymbol{x}_0 \sim q_{\text{data}}$ within few-step forwards. Taking one-step generation as an example, $\boldsymbol \phi_{\theta}$ is expected to satisfy
\begin{equation}
    \min_{\theta}\mathbb{E}_{\boldsymbol x_0, \boldsymbol z}\left[D\left(\boldsymbol \phi_{\theta}(\boldsymbol{z}), \boldsymbol{z} - \boldsymbol{x}_0\right)\right],\label{eq:one_step_tgt}
\end{equation}
where $D(\cdot, \cdot)$ represents a distance metric such as MSE or Kullback-Leibler Divergence, and $\theta$ represents learnable model parameters. This formulation aims to transport the noise distribution to the data distribution directly. However, from the experiences of VAEs~\cite{kingma2013auto} and GANs~\cite{goodfellow2020generative}, it is well known that directly learning data distribution from noise distribution is a challenging task and easily leads to unstable optimization issues such as posterior or mode collapse. Therefore, we instead force $\boldsymbol \phi_{\theta}$ to learn the transport from the marginal distribution $q_t(\boldsymbol{x}_t)$ to the data distribution $q_{\text{data}}$ for any diffusion time step $t$:
\begin{equation}
    \min_{\theta}\mathbb{E}_{t, \boldsymbol x_0, \boldsymbol z}\left[D\left(t\boldsymbol \phi_{\theta}(\boldsymbol{x}_t, t), \boldsymbol{x}_t - \boldsymbol{x}_0 \right)\right],\label{eq:marginal_tgt}
\end{equation}
where $t$ is used for normalization, $\boldsymbol{x}_t = (1-t)\boldsymbol{x}_0 + t \boldsymbol{z}$. With the guidance of a well-pretrained 3D flow model, we can approximate $q_t(\boldsymbol{x}_t)$ using its learned reverse marginal distribution, and approximate the data distribution $q_{\text{data}}$ using the generated teacher distribution. 
Recall Eq.~\ref{eq:flow_transition}, the transport from $p_{\theta}^t(\boldsymbol{x}_t)$ to $p_0$ is $\int_{0}^{t} \boldsymbol v_{\theta}(\boldsymbol x_{\tau}, \tau) \mathrm d \tau$, thus our \textit{primary objective} is formulated as
\begin{multline}\mathcal{L}_{\text{primary}}(\theta) \coloneqq \\
\min_{\theta}\mathbb{E}_{t, \boldsymbol x_0, \boldsymbol z}\left[D\left(t\boldsymbol  \phi_{\theta}(\boldsymbol{x}_t, t), \int_{0}^{t} \boldsymbol{v}_{\textrm{pretrain}}(\boldsymbol{x}_{\tau}, \tau) \mathrm{d}\tau\right)\right],\label{eq:loss_tgt}
\end{multline}
where $\boldsymbol{v}_{\textrm{pretrain}}$ denotes the velocity fields predicted by the pretrained teacher model.

\subsection{Velocity Matching}\label{sec:vm}

Note that the primary objective in Eq.~\ref{eq:loss_tgt} cannot be directly optimized, since the integral $\int_{0}^{t} \boldsymbol{v}_{\textrm{pretrain}}(\boldsymbol{x}_{\tau}, \tau) \mathrm{d}\tau$ is intractable. We differentiate the objective function with respect to $ t $ and turn it to be
\begin{gather}
\min_{\theta}\mathbb{E}_{t, \boldsymbol{x}_0, \boldsymbol z}\left[ D\left( \boldsymbol u_{\theta}(\boldsymbol{x}_t, t), \boldsymbol{v}_{\textrm{pretrain}}(\boldsymbol{x}_{t}, t) \right)\right],\label{eq:vm} \\
\boldsymbol u_{\theta}(\boldsymbol{x}_t, t) = \boldsymbol \phi_{\theta}(\boldsymbol{x}_t, t) + t\frac{\mathrm d \boldsymbol \phi_{\theta}(\boldsymbol{x}_t, t)}{\mathrm d t}.\label{eq:u_theta}
\end{gather}
Here $\boldsymbol u_{\theta}(\boldsymbol{x}_t, t)$ is the derivative of $t\boldsymbol  \phi_{\theta}(\boldsymbol{x}_t, t)$ with respect to $t$, which actually represents the velocity fields.
Intuitively, Eq.~\ref{eq:loss_tgt} supervises $\boldsymbol \phi_{\theta}$ by the transport from the marginal distribution to the data distribution, and Eq.~\ref{eq:vm} converts it to be a supervision on its derivative, \ie, the velocity supervision, to learn student velocity fields matched with the teacher velocity fields. 

Now Eq.~\ref{eq:vm} is tractable and can be directly used for supervision. Given a data sample $\boldsymbol{x}_0 \sim q_{\text{data}}$, we replace $D(\cdot, \cdot)$ with the MSE metric and define our velocity matching loss as 
\begin{multline}\mathcal{L}_{\text{VM}}(\theta) \coloneqq \\
\min_{\theta}\mathbb{E}_{t, \boldsymbol{x}_0, \boldsymbol z}\left[\left\|\boldsymbol \phi_{\theta}(\boldsymbol{x}_t, t) + t\frac{\mathrm d \boldsymbol \phi_{\theta}(\boldsymbol{x}_t, t)}{\mathrm d t} - \boldsymbol{v}_{\textrm{pretrain}}(\boldsymbol{x}_{t}, t) \right\|^2\right].\label{eq:loss_func1}
\end{multline}
We provide a detailed proof in Appendix~\ref{sec:proof_vm} to demonstrate that, the error in the primary objective $\mathcal{L}_{\text{primary}}(\theta)$ is bounded by the error in the velocity matching loss $\mathcal{L}_{\text{VM}}(\theta)$. Therefore, we can effectively learn the primary objective through optimizing the velocity matching loss. 

In practice, we accelerate the convergence by discretely approximating the derivative item
\begin{equation}
\frac{\mathrm d \boldsymbol \phi_{\theta}(\boldsymbol{x}_t, t)}{\mathrm d t} \approx \frac{1}{\Delta t} \left( \boldsymbol \phi_{\theta}(\boldsymbol{x}_{t}, t) - \boldsymbol \phi_{\theta}(\boldsymbol{x}_{t-\Delta t}, t-\Delta t) \right),
\end{equation}
where $\boldsymbol{x}_{t-\Delta t} = \boldsymbol{x}_{t} -  \boldsymbol{v}_{\textrm{pretrain}}(\boldsymbol{x}_{t}, t)\Delta t $, $\Delta t$ is a small constant value and we set it to be $1e-2$. The gradient of the term $\boldsymbol \phi_{\theta}(\boldsymbol{x}_{t-\Delta t}, t-\Delta t)$ is expected to be detached, since solving it requires computing two-order derivative which is computationally expensive.
When training, we stop the gradient of the derivative term $\frac{\mathrm d \boldsymbol \phi_{\theta}(\boldsymbol{x}_t, t)}{\mathrm d t}$ rather than only $\boldsymbol \phi_{\theta}(\boldsymbol{x}_{t-\Delta t}, t-\Delta t)$. This operation makes $\boldsymbol \phi_{\theta}(\boldsymbol{x}_t, t)$ to learn more consistent with $\boldsymbol{v}_{\textrm{pretrain}}(\boldsymbol{x}_{t}, t)$, achieving a more stable optimization process.

\begin{table*}[t!]
	\centering  
	\setlength{\tabcolsep}{12pt}
	\begin{tabular}{c|cc|cc|c}  
		\toprule 
    \multirow{2}{*}{\textbf{Method}}
    & \multirow{2}{*}{\textbf{Inference Steps}}
    & \multirow{2}{*}{\textbf{Inference Time (s)}}
    & \multicolumn{2}{c|}{\textbf{Appearance}}
    & \multicolumn{1}{c}{\textbf{Geometry}}\\
    & 
    &
    & $\textbf{FD}_\textbf{incep}\!\downarrow$
    & $\textbf{FD}_\textbf{dinov2}\!\downarrow$
    & $\textbf{ULIP}_\textbf{I}\!\uparrow$\\
        \midrule

    LGM\textsuperscript{*} & -- & 5 & 26.31 & 322.71& -- \\ 
    3DTopia-XL\textsuperscript{*} & 25 & 5  & 37.68  & 437.37 &	--  \\ 
    Ln3Diff\textsuperscript{*} &  250 & 8 & 26.61  & 357.93  & --  \\ 
    TRELLIS\textsuperscript{*} & 25 $\times$ 2  & -- & 9.35 & 67.21   & -- \\
    \midrule
    TRELLIS & 25 $\times$ 2 & 6.1 & 11.80 & 65.24 & 39.53 \\
    FlashVDM & 5 & 1.30 &  -- &  -- &   37.91\\
     \midrule
    \multirow{2}{*}{Ours} & 1 $\times$ 2 & 0.68  & 18.09  & 164.2 & 36.88  \\
    & 2 $\times$ 2 & 0.94  & 14.16  & 110.9 & 39.11  \\
	\bottomrule
	\end{tabular}  
        \caption{Quantitative comparison on LGM~\cite{tang2024lgm}, 3DTopia-XL~\cite{3dtopiaxl}, Ln3Diff~\cite{lan2024ln3diff}, and our teacher model TRELLIS~\cite{trellis}. * denotes that the metrics are from TRELLIS, which are measured on the subset of the Toys4K dataset~\cite{toys4k}, and the inference time comes from their original paper. The other reported inference times measured by us are based on evaluations performed on a NVIDIA A800 GPU.}
	\label{tab:comparison}  
\end{table*} 

\begin{figure*}[ht!]
\centering
\vspace{-10pt}
\includegraphics[width=1\textwidth]{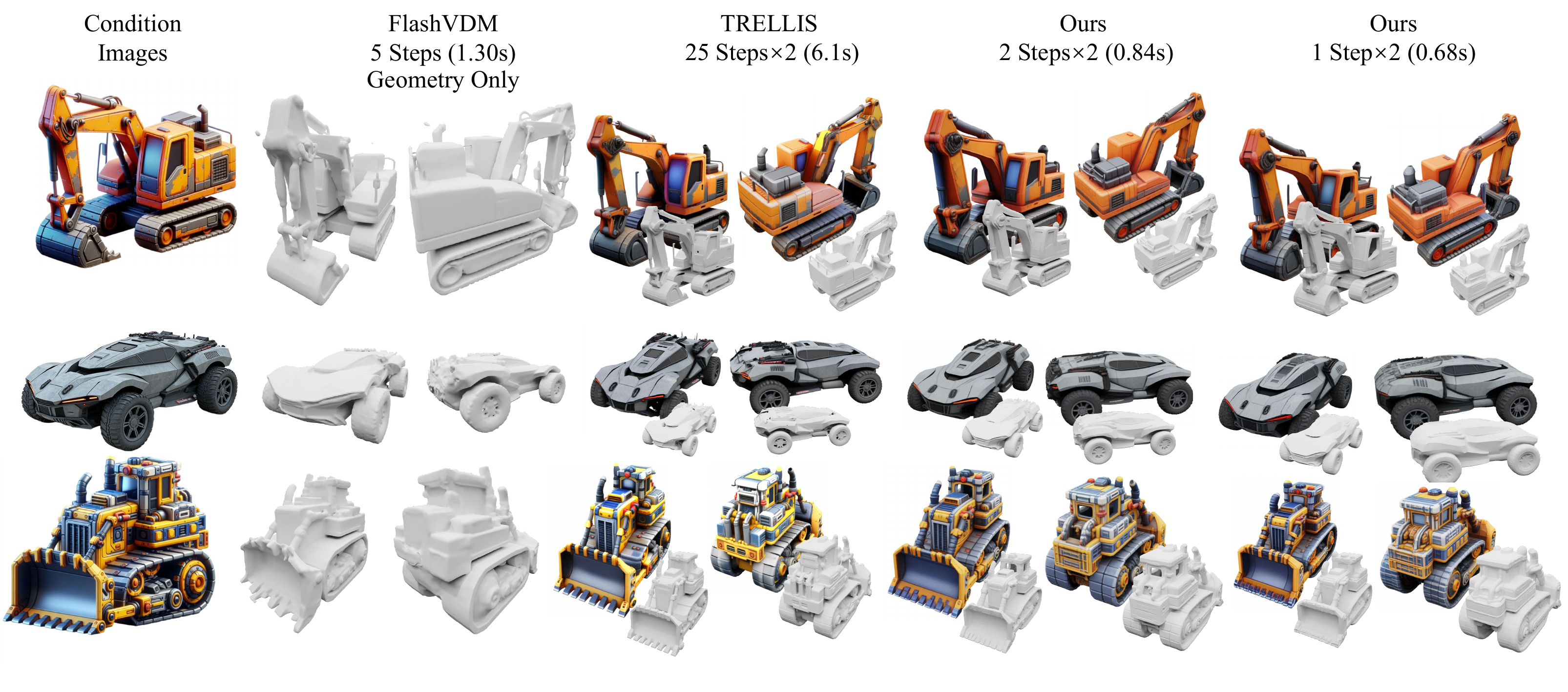}
\vspace{-20pt}
\caption{Qualitative results of FlashVDM~\cite{lai2025flashvdm}, the teacher model TRELLIS~\cite{trellis}, and our method. Since FlashVDM does not generate the appearance of 3D assets, we compare with FlashVDM only on geometry.}
\vspace{-10pt}
\label{fig:maincompare}
\end{figure*}

\begin{algorithm}[t!]
    \scriptsize
    \caption{\label{alg:vm_vd}\ours~Training Procedure}
    \KwIn{Pretrained flow model $\boldsymbol{v}_{\text{pretrain}}$, dataset $q_{\text{data}}$}
    \KwOut{Trained generator $\boldsymbol{\phi}_\theta$}

    \tcp{Initialize generator from pretrained model}
    $\boldsymbol{\phi}_\theta \leftarrow \text{copyWeights}(v_{\text{pretrain}})$

    \While{not converge}{
        Sample time step $t \sim \mathcal{U}(0,1)$

        \text{~}
        
        \tcp{Compute Velocity Matching Loss}
        
        Sample $\boldsymbol{x}_0 \sim q_{\text{data}}$, $\boldsymbol{z} \sim \mathcal{N}(\boldsymbol{0}, \boldsymbol{I})$
        
        $\boldsymbol{x}_t \leftarrow (1-t)\boldsymbol{x}_0 + t \boldsymbol{z}$
        
        $\boldsymbol{x}_{t-\Delta t} \leftarrow \boldsymbol{x}_t - \boldsymbol{v}_{\text{pretrain}}(\boldsymbol{x}_t, t) \cdot \Delta t$
        
        $\boldsymbol{u}_\theta(\boldsymbol{x}_t, t) \leftarrow \boldsymbol{\phi}_\theta(\boldsymbol{x}_t, t) + t \cdot \frac{1}{\Delta t} \cdot \text{sg} \left( \boldsymbol{\phi}_\theta(\boldsymbol{x}_t, t) - \boldsymbol{\phi}_\theta(\boldsymbol{x}_{t-\Delta t}, t-\Delta t) \right)$
        
        $\mathcal{L}_{\text{VM}} \leftarrow \mathbb{E}_{t, \boldsymbol{x}_0, \boldsymbol{z}} \left[ \left\| \boldsymbol{u}_\theta(\boldsymbol{x}_t, t) - \boldsymbol{v}_{\text{pretrain}}(\boldsymbol{x}_t, t) \right\|^2 \right]$

        \text{~}
        
        \tcp{Compute Velocity Distillation Loss}

        Sample $\boldsymbol{z}', \boldsymbol{z}'' \sim \mathcal{N}(\boldsymbol{0}, \boldsymbol{I})$
        
        $\boldsymbol{x}' \leftarrow \boldsymbol z' - \boldsymbol{\phi}_\theta(\boldsymbol{z}', 1)$
        
        $\boldsymbol{x}'_t \leftarrow (1-t) \boldsymbol{x}' + t \boldsymbol{z}''$

        $\boldsymbol{x}'_{t-\Delta t} \leftarrow \boldsymbol{x}'_t - \boldsymbol{v}_{\text{pretrain}}(\boldsymbol{x}'_t, t) \cdot \Delta t$
        
        $\boldsymbol{u}_\theta(\boldsymbol{x}'_t, t) \leftarrow \boldsymbol{\phi}_\theta(\boldsymbol{x}'_t, t) + t \cdot \frac{1}{\Delta t} \cdot \text{sg} \left( \boldsymbol{\phi}_\theta(\boldsymbol{x}'_t, t) - \boldsymbol{\phi}_\theta(\boldsymbol{x}'_{t-\Delta t}, t-\Delta t) \right)$

        $\mathcal{L}_{\text{VD}} \leftarrow \mathbb{E}_{t, \boldsymbol{z}', \boldsymbol{z}''} \left[ -\left( \boldsymbol{u}_\theta(\boldsymbol{x}'_t, t) - \boldsymbol{v}_{\text{pretrain}}(\boldsymbol{x}'_t, t) \right) \cdot \frac{\partial \boldsymbol{x}'_t}{\partial \theta} \right]$

        \text{~}
        
        \tcp{Update model}
        
        $\mathcal{L} \leftarrow \mathcal{L}_{\text{VM}} + \lambda \mathcal{L}_{\text{VD}}$

        $\boldsymbol{\phi}_\theta \leftarrow \text{update}(\boldsymbol{\phi}_\theta, \mathcal{L})$
    }
\end{algorithm}

\subsection{Velocity Distillation}\label{sec:vd}
Though Eq.~\ref{eq:loss_func1} serves as a tractable objective, we clarify that with only the velocity matching loss the student model $\boldsymbol \phi_{\theta}$ cannot be optimized well. 
An inherent flaw in the velocity matching loss is that both the continuous and discrete formulation of $\boldsymbol \phi_{\theta}(\boldsymbol{x}_t, t)$ cannot be back-propagated properly. Therefore, the optimization of Eq.~\ref{eq:loss_func1} provides a biased gradient estimate.

We revisit our primary objective from the perspective of score distillation~\cite{poole2022dreamfusion, wang2023prolificdreamer}. Since the student and teacher model have the same target distribution $q_{\text{data}}$, we can equivalently convert the constraint on the marginal-data transport into the constraint on the marginal distribution.
Then the primary objective turns to minimize the Kullback-Leibler divergence between the student marginal $p_{\theta}^t$ and the marginal $q_t$ (note we approximate the real marginal with the teacher marginal):
\begin{equation}
    \min_{\theta}\mathbb{E}_t\left[D_{\mathrm{KL}}\left(p_{\theta}^t\parallel q_t\right)\right],
\end{equation}
which is the objective of our velocity distillation.
Specifically, a sample $\boldsymbol{x}' = \boldsymbol{z}' - \boldsymbol{\phi}_\theta(\boldsymbol{z}')$ is first synthesized from a noise sample $\boldsymbol{z}'\sim\mathcal{N}(\boldsymbol{0},\boldsymbol{I})$. Then $\boldsymbol{x}'$ is diffused with $\boldsymbol{z}'' \sim\mathcal{N}(\boldsymbol{0},\boldsymbol{I})$ to obtain $\boldsymbol{x}'_t = (1-t) \boldsymbol{x}' + t \boldsymbol{z}''$. The gradient of the training objective can then be written as
\begin{equation}\begin{aligned}
 & \nabla_\theta\mathbb{E}_t\left[D_{\mathrm{KL}}\left(p_{\theta}^t\parallel q_t\right)\right] \\
 & =\mathbb{E}_{t,\boldsymbol{z}',\boldsymbol{z}''}\left[\nabla_\theta\left(\log p_{\theta}^t(\boldsymbol{x}'_t)-\log q_t(\boldsymbol{x}'_t)\right)\right] \\
 & =\mathbb{E}_{t,\boldsymbol{z}',\boldsymbol{z}''}\left[\left(\nabla_{\boldsymbol{x}'_t}\log p_{\theta}^t(\boldsymbol{x}'_t)-\nabla_{\boldsymbol{x}'_t}\log q_t(\boldsymbol{x}'_t)\right)\frac{\partial\boldsymbol{x}'_t}{\partial\theta}\right].\label{eq:kl_div}
\end{aligned}\end{equation}
ProlificDreamer~\cite{wang2023prolificdreamer} points out that $\nabla_{\boldsymbol{x}'_t}\log p_{\theta}^t(\boldsymbol{x}'_t)$ and $\nabla_{\boldsymbol{x}'_t}\log q_t(\boldsymbol{x}'_t)$ represent the score~\cite{song2020score} of the noisy prediction and the noisy real data respectively.
Here we use the student velocity fields $\boldsymbol u_{\theta}(\boldsymbol{x}'_t, t)$ to replace $-\nabla_{\boldsymbol{x}'_t}\log p_{\theta}^t(\boldsymbol{x}'_t)$, and the teacher velocity fields  $\boldsymbol{v}_{\textrm{pretrain}}(\boldsymbol{x}'_{t}, t)$ to replace $-\nabla_{\boldsymbol{x}'_t}\log q_t(\boldsymbol{x}'_t)$. A detailed proof of the rationale for this choice can be found in Appendix~\ref{sec:proof_vd}. Substituted with Eq.~\ref{eq:u_theta}, the gradient of our velocity distillation loss is formulated as 

\begin{multline}
\nabla_\theta\mathcal{L}_{\mathrm{VD}}(\theta) \coloneqq \\
\mathbb{E}_{t,\boldsymbol{z}',\boldsymbol{z}''}\Big[ -\Big( \boldsymbol \phi_{\theta}(\boldsymbol x'_{t}, t) + t\frac{\mathrm d \boldsymbol \phi_{\theta}(\boldsymbol x'_{t}, t)}{\mathrm d t} - \boldsymbol v_{\textrm{pretrain}}(\boldsymbol x'_{t}, t) \Big)\frac{\partial \boldsymbol x'_t}{\partial\theta}\Big].\label{eq:loss_func2}
\end{multline}
Intuitively, Eq.~\ref{eq:loss_func2} performs probability density distillation which serves as a soft supervision. It applies $\boldsymbol u_{\theta}(\boldsymbol{x}'_t, t) - \boldsymbol{v}_{\textrm{pretrain}}(\boldsymbol{x}'_{t}, t)$ as a criterion to measure the discrepancy between the teacher and student marginals, and indirectly optimizes $\boldsymbol \phi_{\theta}$ through optimizing $\boldsymbol{x}'_{t}$.

\subsection{Optimization}\label{sec:final_loss}
Our velocity matching loss and velocity distillation loss are complementary. While both are d erived from the primary objective, velocity matching provides partially right gradient estimates, improving the performance of $\boldsymbol \phi_{\theta}(\cdot, t)$ for all the time steps $t$. Velocity distillation further utilizes the optimized $\boldsymbol \phi_{\theta}(\cdot, t)$ as a criterion for measuring distribution discrepancy, enhancing the one-step performance of $\boldsymbol \phi_{\theta}$. Our final loss function is formulated as
\begin{equation}
    \mathcal{L}_{\text{\ours}} = \mathcal{L}_{\text{VM}} + \lambda \mathcal{L}_{\text{VD}},
\end{equation}
where $\lambda$ is a hyper-parameter and we set it to be 1.0. Algorithm~\ref{alg:vm_vd} outlines our final training procedure.

\subsection{Relation to Prior Work}\label{sec:relation_prev}
Velocity matching and velocity distillation are both derived from our primary objective.
Velocity matching is related to consistency models~\cite{song2023consistency} and MeanFlow~\cite{geng2025mean}, but still has essential differences. Consistency models are derived from the consistency constraint between adjacent time steps. Compared with consistency models, our objective learns $\boldsymbol \phi_{\theta}(\boldsymbol{x}_t, t)$ more consistent with $\boldsymbol{v}_{\textrm{pretrain}}(\boldsymbol{x}_{t}, t)$, leading to more stable optimization. MeanFlow needs two time variables as model input, thus being not suitable for distilling a pretrained 3D flow model. Differently, our method is motivated by learning the marginal-data transport, and is designed to distill a pretrained flow model.

Velocity distillation is related to Score Distillation Sampling (SDS)~\cite{poole2022dreamfusion} and Variational Score Distillation (VSD)~\cite{wang2023prolificdreamer}. SDS and VSD both try to measure the student and teacher marginals with the score, i.e., the gradient of the log probability density. SDS only uses the added noise as the student score, leading to a single-point Dirac distribution estimation~\cite{wang2023prolificdreamer}. VSD finetunes an additional diffusion model $\epsilon_{\phi}$ to learn the student distribution, resulting in additional memory cost and further learning errors. Instead, our method only learns one model to be both the generator and the distribution measure, being both low-cost and accurate.

\begin{figure*}[h]
\centering
\includegraphics[width=1\textwidth]{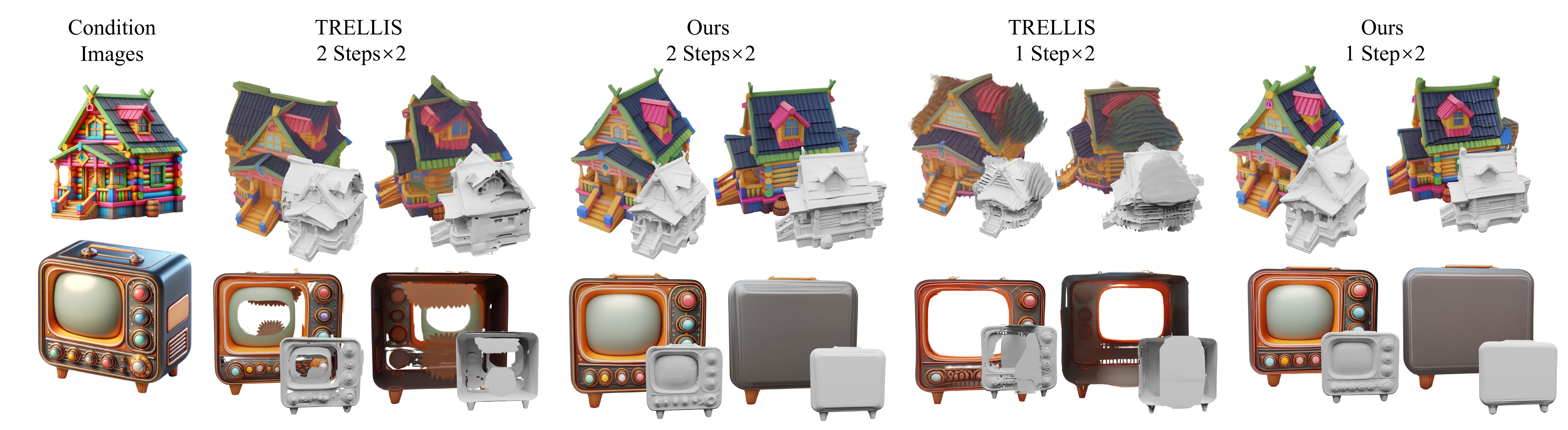}
\vspace{-15pt}
\caption{Qualitative results with and without distillation during few-step inference.}
\label{fig:fewsteps_abla}
\end{figure*}

\begin{figure*}[h]
\centering
\vspace{-7pt}
\includegraphics[width=1\textwidth]{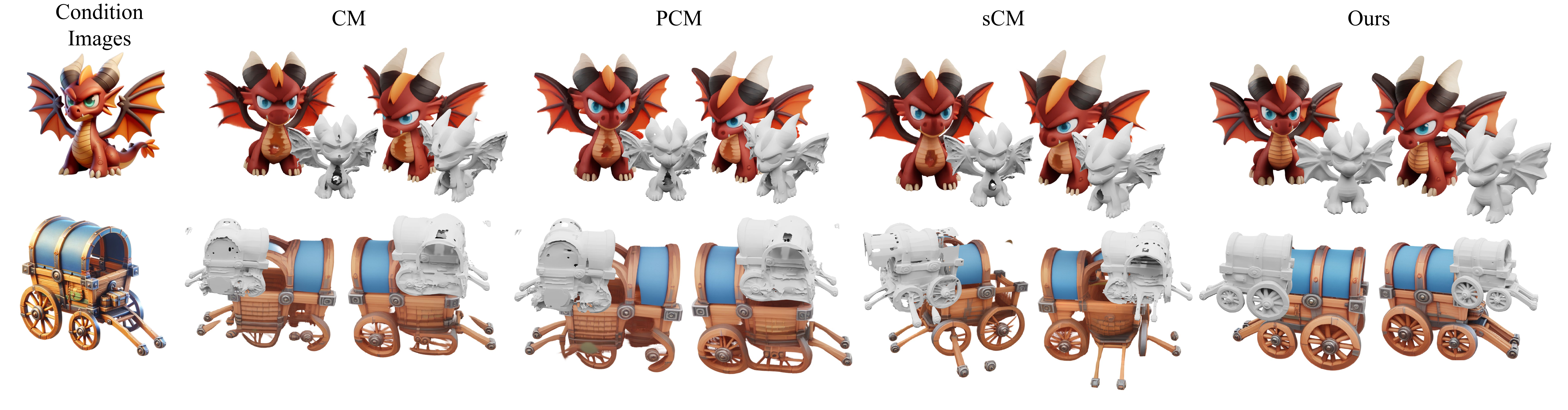}
\vspace{-20pt}
\caption{Qualitative comparison on CM~\cite{song2023consistency}, PCM~\cite{wang2024phased}, sCM~\cite{lu2024simplifying} and ours. Our method exhibits the most complete and fine-grained geometric and visual details.}
\label{fig:cm_abla}
\end{figure*}

\begin{figure}[t]
\centering
\includegraphics[width=1\columnwidth]{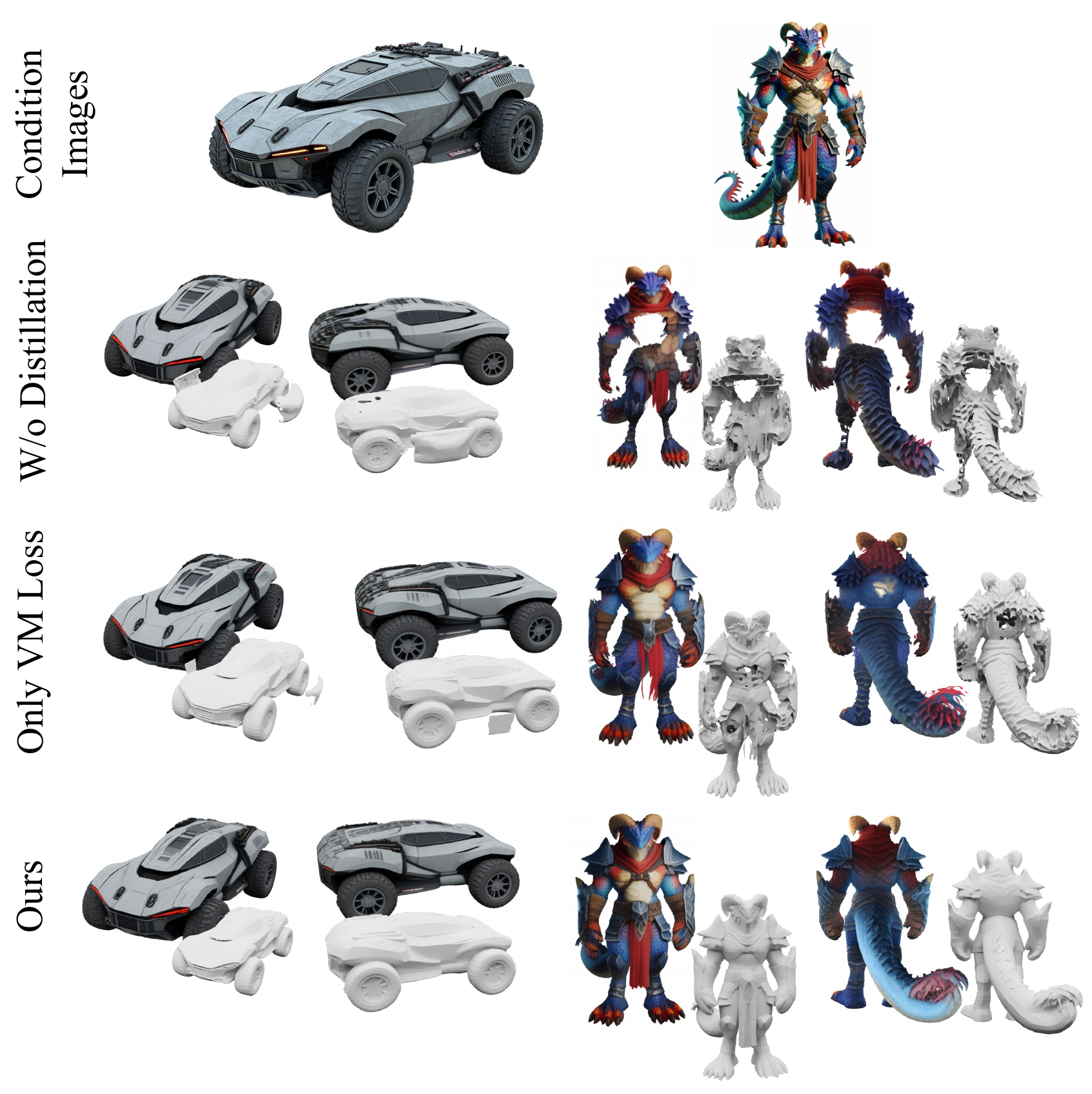}
\vspace{-15pt}
\caption{Qualitative results of ablation studies on our proposed two loss functions.}
\label{fig:vm_vd}
\end{figure}

\section{Experiments}
\paragraph{Implementation Details.}
Our training details follow that of TRELLIS. We only train the generation stage of TRELLIS, the flow transformer for structure generation, and the sparse flow transformer for structured latents generation. For each iteration, we randomly select one image from 24 conditional images as the input condition. The conditional images are rendered with different FoVs. We use AdamW~\cite{loshchilov2017decoupled} as our optimizer. The classifier-free guidance (CFG)~\cite{ho2022classifier} in training is set to be 40 for VM and 100 for VD. We train the structure flow transformer for 6k steps with learning rate $4e-8$ and the sparse flow transformer for 4k steps with learning rate $1e-8$, both with batch size 256.
We finetune the pretrained 1.1B (Large) image model of TRELLIS. The sampler during inference is the same as TRELLIS.

\paragraph{Datasets.}
For training, we use around 500K 3D assets from the Objaverse (XL)~\cite{deitke2023objaverse}, ABO~\cite{collins2022abo}, 3D-FUTURE~\cite{3dfuture}, and HSSD datasets~\cite{khanna2023hssd}.
For evaluation, we use all the 3,218 3D assets from the Toys4k dataset~\cite{toys4k}.

\paragraph{Metrics.}
For quantitative evaluation, we use the Fréchet Distance (FD) with different feature extractors. For appearance, we use the Inceptionv3 model~\cite{xia2017inception} ($\text{FD}_\text{incep}$) and the DINOv2 model~\cite{oquab2023dinov2} ($\text{FD}_\text{dinov2}$) as feature extractors. Following TRELLIS~\cite{trellis}, we render the generated 3D assets at an elevation of 30 degrees and at four azimuth angles (0, 90, 180, and 270 degrees), resulting in rendering four views per asset. The ground truth 3D assets are rendered in the same way, and we compute the FD between the generated and ground truth images. 
For geometry, we follow FlashVDM~\cite{lai2025flashvdm} and Direct2D-S2~\cite{wu2025direct3ds2} to use ULIP-2~\cite{xue2024ulip2} for calculating the similarity between the images and the final mesh. To accurately measure the similarity between the images and the mesh, we use the four rendered views mentioned above, calculate the similarity for each view, and then take the average. For the metric $\text{ULIP}_\text{I}$ in the paper, we multiply the results by 100.

\begin{table}[t!]
\newcommand{\xmark}{\ding{55}}
\centering
\setlength{\tabcolsep}{6pt}
\begin{tabular}{c|ccc}
\toprule
  & $\textbf{FD}_\textbf{incep}\!\downarrow$ & $\textbf{FD}_\textbf{dinov2}\!\downarrow$ & $\textbf{ULIP}_\textbf{I}\!\uparrow$ \\
\midrule
 CM & 20.06  & 194.5 & 34.62 \\
   PCM & 19.53  & 189.5 & 34.96 \\
  sCM & 19.29 & 186.0 & 35.04\\
  Ours & 18.09  & 164.2 & 36.88 \\
 \bottomrule
\end{tabular}
\caption{Quantitative comparison on CM~\cite{song2023consistency}, PCM~\cite{wang2024phased}, sCM~\cite{lu2024simplifying} and ours.}
\label{tab: cm}
\vspace{-5pt}
\end{table}

\begin{table}[t!]
\newcommand{\xmark}{\ding{55}}
\centering
\setlength{\tabcolsep}{6pt}
\begin{tabular}{cc|ccc}
\toprule
$\mathcal{L}_{\text{VM}}$ & $\mathcal{L}_{\text{VD}}$ & $\textbf{FD}_\textbf{incep}\!\downarrow$ & $\textbf{FD}_\textbf{dinov2}\!\downarrow$ & $\textbf{ULIP}_\textbf{I}\!\uparrow$ \\
\midrule
 $\times$ & $\times$  & 20.26 & 195.6 & 34.62 \\
 $\checkmark$ &  $\times$  & 18.42 & 172.0& 35.99  \\
 $\checkmark$ &  $\checkmark$  & 18.09  & 164.2 & 36.88 \\
 \bottomrule
\end{tabular}
\caption{Ablation studies on our proposed two loss functions.}
\label{tab: loss func}
\vspace{-10pt}
\end{table}

\subsection{Quantitative Comparison}
Since methods for distilling 3D diffusion models remain scarce, FlashVDM~\cite{lai2025flashvdm} is one of the few. We also compare against non-distilled methods (LGM~\cite{tang2024lgm}, 3DTopia-XL~\cite{3dtopiaxl}, Ln3Diff~\cite{lan2024ln3diff}) and the teacher model TRELLIS~\cite{trellis}. Apart from LGM, all others are based on 3D diffusion. The results are shown in Table~\ref{tab:comparison}. 

\paragraph{Non-diffusion-based Method.}
We compare our method with LGM~\cite{tang2024lgm}, one of the most influential non-3D-diffusion-based 3D generation methods. 
Compared to LGM, our approach distills an end-to-end 3D diffusion model, which has stronger 3D awareness, improves both appearance and geometry quality of the generated 3D assets, and reduces inference time.

\paragraph{Non-distilled 3D Diffusion Methods.}
We further compare with non-distilled 3D diffusion methods, including 3DTopia-XL~\cite{3dtopiaxl}, Ln3Diff~\cite{lan2024ln3diff}, and the teacher model TRELLIS~\cite{trellis}. After distillation, our method reduces the two 3D flow transformers of TRELLIS from 25 steps each ($50$ steps total) to $1$ and $1$ steps ($2$ steps in total), achieving high-quality 3D generation in just $0.68\text{s}$. 
We achieve comparable performance on appearance metrics ($\text{FD}_\text{incep}$, $\text{FD}_\text{dinov2}$) and the geometry metric ($\text{ULIP}_\text{I}$) compared with the teacher model when the number of inference steps is 4.
Compared to Ln3Diff and 3DTopia-XL~\cite{3dtopiaxl}, our method outperforms it in both appearance and geometry evaluations with fewer than $12\times$ the inference steps and inference time.

\paragraph{Distilled 3D Diffusion Method.}
Compared to the recent distilled 3D-shape diffusion model FlashVDM~\cite{lai2025flashvdm}, we evaluate only its geometry metric ($\text{ULIP}_\text{I}$), since it does not generate appearance. We use the officially released code of FlashVDM and evaluate it with the default configuration. Our method and FlashVDM complete inference in a similar number of steps. On the geometry metric $\text{ULIP}_\text{I}$, our method outperforms FlashVDM.

\subsection{Qualitative Comparison}

\paragraph{Qualitative Comparison with Other Methods.}
We present a visual comparison of the results in Fig.~\ref{fig:maincompare}. The results show that our method can generate high-quality 3D assets with both geometry and appearance. Since FlashVDM~\cite{lai2025flashvdm} does not generate appearance for 3D assets, our comparison with FlashVDM mainly focuses on geometry. Compared to FlashVDM, our method generates geometry with more details and better consistency with the conditioned images. 
Compared with the teacher model TRELLIS~\cite{trellis}, our method generates comparable 3D assets.


\subsection{Comparison with CM methods}
We conduct qualitative and quantitative comparison with three CM methods: CM~\cite{song2023consistency},  PCM~\cite{wang2024phased}, and sCM~\cite{lu2024simplifying}. The results are shown in Fig.~\ref{fig:cm_abla} and Table~\ref{tab: cm}. As the basic consistency model, CM performs the worst. Due to the multi-phase design, PCM learns more stably and achieves a better performance than CM. sCM utilizes a continuous format of CM, which eliminates the discretization errors, thereby performing better than the other two CM methods. Derived from a different objective, our method largely outperforms all the CM methods, with more complete and fine-grained geometric and visual details.

\subsection{Ablation Study}
We conduct ablation studies on our proposed two loss functions to validate their effectiveness. 

\paragraph{Velocity Matching.} 
The quantitative results of the ablation study on VM are in Table~\ref{tab: loss func}. It can be found that directly applying VM to finetune TRELLIS improves both the geometric quality and the visual quality. In Fig.~\ref{fig:vm_vd}, applying VM significantly reduces the unwanted extra geometry generated by the model, while partially restoring the missing and incomplete geometric structures. Moreover, compared to the generation result of the non-distilled model, the distilled model exhibits enhanced geometric fidelity, particularly in fine structures such as the shape of car windows and the overall body geometry.

\paragraph{Velocity Distillation.} 
The quantitative results of the ablation study on VD are in Table~\ref{tab: loss func}. With the help of VM, VD is capable of learning the geometry better. In Fig.~\ref{fig:vm_vd}, VD further eliminates the unwanted extra geometry and addresses the remaining incomplete regions in the geometry.

\paragraph{Joint Optimization.}
In Table~\ref{tab: loss func}, joint optimization with VM and VD achieves the best performance, demonstrating the complementarity between the two loss functions. In Fig.~\ref{fig:fewsteps_abla}, under the joint effect of the two loss functions, both geometry and visual quality are significantly improved compared to the TRELLIS baseline. Our method significantly addresses the remaining incomplete regions in the geometry and improves the details.

\section{Conclusion}
We present a novel framework MDT-dist for few-step flow distillation in 3D generation. By formulating a primary objective as modeling the transport from the marginal distribution to the data distribution, our approach provides a more direct solution to few-step generation, in contrast to consistency models that rely on consistency constraints on the adjacent time steps. 
To effectively optimize this objective, we introduce two optimization objectives: Velocity Matching (VM), which converts the optimization target from the transport level to the velocity level, enabling tractable and stable matching of velocity fields between the student and the teacher, and Velocity Distillation (VD), which converts the optimization target from the transport level to the distribution level, leveraging the learned velocity fields to perform probability density distillation.
When applied to TRELLIS, our method reduces the sampling steps from 25 to 1–2 while preserving high geometric and visual fidelity. Extensive experiments demonstrate that our approach significantly outperforms existing consistency distillation techniques, and achieves new state-of-the-art performance in few-step 3D generation.
\paragraph{Limitations and Future Work.}
Our method still requires a large amount of conditional images and geometric data to conduct few-step distillation training. While images are relatively easy to collect, high-quality geometric data is much more scarce and expensive, making the few-step 3D generation distillation costly. A possible improvement is to eliminate the dependence on geometric data and take only conditional images as input, which will reduce the cost and further scale up the distillation by leveraging the abundant online images.

{
    \small
    \bibliographystyle{ieeenat_fullname}
    \bibliography{main}
}
\appendix
\clearpage
\maketitlesupplementary

\section{Proofs}

\subsection{Velocity Matching}\label{sec:proof_vm}
\begin{theorem}
Assume the velocity matching loss is uniformly bounded above by a constant \(M > 0\), i.e., for all \(t \in [0,1]\), \(\boldsymbol{x}_0 \sim q_{\text{data}}\), and \(\boldsymbol{z} \sim \mathcal{N}(\boldsymbol 0, \boldsymbol I)\),
\begin{equation*}
    \|\boldsymbol{u}_{\theta}(\boldsymbol{x}_t, t) - \boldsymbol{v}_{\textrm{pretrain}}(\boldsymbol{x}_t, t)\|^2 \leq M.
\end{equation*}
Under this assumption, the primary objective with $D(\cdot, \cdot)$ replaced by MSE metric satisfies the error bound
\begin{equation*}
L_{\text{primary}}(\theta) \leq M \cdot \mathbb{E}_t [t^2],
\end{equation*}
where the expectation is over the distribution of \(t\). For a fixed \(t\), the bound simplifies to \(L_{\text{primary}}(\theta \mid t) \leq M t^2\).
\end{theorem}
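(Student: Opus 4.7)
The plan is to reduce the bound on the primary objective to the velocity matching loss by recognizing that $\boldsymbol{u}_{\theta}$ is precisely the total time derivative of $t\boldsymbol{\phi}_{\theta}(\boldsymbol{x}_t, t)$ along the ODE trajectory. First I would observe that, by the definition given in Eq.~\ref{eq:u_theta},
\begin{equation*}
\frac{\mathrm d}{\mathrm d t}\bigl[t\,\boldsymbol{\phi}_{\theta}(\boldsymbol{x}_t, t)\bigr] = \boldsymbol{\phi}_{\theta}(\boldsymbol{x}_t, t) + t\,\frac{\mathrm d \boldsymbol{\phi}_{\theta}(\boldsymbol{x}_t, t)}{\mathrm d t} = \boldsymbol{u}_{\theta}(\boldsymbol{x}_t, t),
\end{equation*}
so applying the fundamental theorem of calculus on $[0, t]$ (the boundary term at $\tau = 0$ vanishes because it carries the prefactor $\tau$ and $\boldsymbol{\phi}_\theta$ is finite) gives $t\boldsymbol{\phi}_{\theta}(\boldsymbol{x}_t, t) = \int_0^t \boldsymbol{u}_{\theta}(\boldsymbol{x}_\tau, \tau)\,\mathrm d\tau$.

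Next I would subtract the teacher integral from this identity to obtain the clean representation
\begin{equation*}
t\boldsymbol{\phi}_{\theta}(\boldsymbol{x}_t, t) - \int_0^t \boldsymbol{v}_{\textrm{pretrain}}(\boldsymbol{x}_\tau, \tau)\,\mathrm d\tau = \int_0^t \bigl[\boldsymbol{u}_{\theta}(\boldsymbol{x}_\tau, \tau) - \boldsymbol{v}_{\textrm{pretrain}}(\boldsymbol{x}_\tau, \tau)\bigr]\mathrm d\tau.
\end{equation*}
The integrand on the right-hand side is exactly the pointwise residual whose squared norm the velocity matching loss controls. I would then apply the Cauchy--Schwarz inequality (equivalently, Jensen's inequality for the convex map $y \mapsto \|y\|^2$) to the vector-valued integral to get $\bigl\|\int_0^t (\boldsymbol{u}_{\theta} - \boldsymbol{v}_{\textrm{pretrain}})\,\mathrm d\tau\bigr\|^2 \le t \int_0^t \|\boldsymbol{u}_{\theta}(\boldsymbol{x}_\tau, \tau) - \boldsymbol{v}_{\textrm{pretrain}}(\boldsymbol{x}_\tau, \tau)\|^2\,\mathrm d\tau$.

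Finally I would invoke the uniform bound $\|\boldsymbol{u}_{\theta} - \boldsymbol{v}_{\textrm{pretrain}}\|^2 \le M$ hypothesized in the statement, yielding the per-time bound $\le M t^2$, and then take expectation over $t$ (using Fubini to swap expectation and $\mathrm d\tau$ if one prefers a stronger form in terms of the actual VM loss at each $\tau$) to conclude $\mathcal{L}_{\text{primary}}(\theta) \le M\,\mathbb{E}_t[t^2]$. The main subtlety, rather than a genuine obstacle, is justifying that the chain rule yielding $\boldsymbol{u}_\theta = \frac{\mathrm d}{\mathrm d t}[t\boldsymbol{\phi}_\theta]$ is applied along the correct trajectory $\boldsymbol{x}_\tau = (1-\tau)\boldsymbol{x}_0 + \tau\boldsymbol{z}$ that is shared by both the student and teacher integrands, so that the two integrals can be combined on a common path; this is consistent with the construction of $\boldsymbol{x}_t$ used throughout Sec.~\ref{sec:vm}, and the rest reduces to routine application of Cauchy--Schwarz and the stated hypothesis.
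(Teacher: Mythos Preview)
Your proposal is correct and follows essentially the same route as the paper: both write the primary-objective residual as $\int_0^t(\boldsymbol u_\theta-\boldsymbol v_{\textrm{pretrain}})\,\mathrm d\tau$ via the fundamental theorem of calculus, then bound it using the uniform hypothesis. The only cosmetic difference is that you invoke Cauchy--Schwarz/Jensen to get $\|\int_0^t \boldsymbol e\,\mathrm d\tau\|^2 \le t\int_0^t\|\boldsymbol e\|^2\,\mathrm d\tau\le Mt^2$, whereas the paper uses the triangle inequality $\|\int_0^t \boldsymbol e\,\mathrm d\tau\|\le \int_0^t\|\boldsymbol e\|\,\mathrm d\tau\le t\sqrt{M}$ and then squares; both yield the identical bound.
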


\begin{proof}
Define the learnable student transport function to be $\boldsymbol T_{\theta}(t) = t \boldsymbol{\phi}_{\theta}(\boldsymbol{x}_t, t)$, and the target integral $\boldsymbol T_{\text{pretrain}}(t) = \int_0^t \boldsymbol{v}_{\textrm{pretrain}}(\boldsymbol{x}_{\tau}, \tau) \, \mathrm{d}\tau$. Note that \(\boldsymbol T_{\theta}(0) = \boldsymbol T_{\text{pretrain}}(0) = \boldsymbol{0}\), and their derivatives satisfy \(\frac{\mathrm{d} \boldsymbol T_{\theta}(t)}{\mathrm{d} t} = \boldsymbol{u}_{\theta}(\boldsymbol{x}_t, t)\) and \(\frac{\mathrm{d} \boldsymbol T_{\text{pretrain}}(t)}{\mathrm{d} t} = \boldsymbol{v}_{\textrm{pretrain}}(\boldsymbol{x}_t, t)\).
The error for a fixed \(t\), \(\boldsymbol{x}_0\), and \(\boldsymbol{z}\) is
\begin{equation}
\begin{aligned}
D(\boldsymbol T_{\theta}(t), \boldsymbol T_{\text{pretrain}}(t)) &= \left\| \int_0^t \left( \boldsymbol{u}_{\theta}(\boldsymbol{x}_{\tau}, \tau) - \boldsymbol{v}_{\textrm{pretrain}}(\boldsymbol{x}_{\tau}, \tau) \right) \mathrm{d}\tau \right\|^2 \\
&= \left\| \int_0^t \boldsymbol{e}(\tau) \, \mathrm{d}\tau \right\|^2,
\end{aligned}
\end{equation}
where \(\boldsymbol{e}(\tau) = \boldsymbol{u}_{\theta}(\boldsymbol{x}_{\tau}, \tau) - \boldsymbol{v}_{\textrm{pretrain}}(\boldsymbol{x}_{\tau}, \tau)\) satisfies \(\|\boldsymbol{e}(\tau)\|^2 \leq M\) for all \(\tau \in [0,t]\) by assumption (with \(D\) as squared norm).
By the triangle inequality for norms,
\begin{equation}
\left\| \int_0^t \boldsymbol{e}(\tau) \, \mathrm{d}\tau \right\| \leq \int_0^t \|\boldsymbol{e}(\tau)\| \, \mathrm{d}\tau \leq \int_0^t \sqrt{M} \, \mathrm{d}\tau = t \sqrt{M},
\end{equation}
since \(\|\boldsymbol{e}(\tau)\| \leq \sqrt{M}\). Squaring both sides gives
\begin{equation}
\left\| \int_0^t \boldsymbol{e}(\tau) \, \mathrm{d}\tau \right\|^2 \leq M t^2.
\end{equation}
Thus, for fixed \(t\),
\begin{equation}
L_{\text{primary}}(\theta \vert t) = \mathbb{E}_{\boldsymbol{x}_0, \boldsymbol{z}} \left[ D(\boldsymbol T_{\theta}(t), \boldsymbol T_{\text{pretrain}}(t)) \right] \leq M t^2,
\end{equation}
as the bound holds uniformly. Taking expectation over \(t\),
\begin{equation}\begin{aligned}
L_{\text{primary}}(\theta) &= \mathbb{E}_t \mathbb{E}_{\boldsymbol{x}_0, \boldsymbol{z}} \left[ D(\boldsymbol T_{\theta}(t), \boldsymbol T_{\text{pretrain}}(t)) \right] \\ &\leq \mathbb{E}_t [M t^2] = M \mathbb{E}_t [t^2].
\end{aligned}\end{equation}
\end{proof}

This proof suggests that the error in the primary objective is bounded by the error in the velocity matching loss. Therefore, we can learn the primary objective through optimizing the velocity matching loss.

\subsection{Velocity Distillation}\label{sec:proof_vd}

Score-SDE~\cite{song2020score} defines the forward diffusion process by the forward-time SDE:
\begin{equation}
\mathrm{d}\boldsymbol{x}_t = \boldsymbol{f}(\boldsymbol{x}_t, t) \, \mathrm{d}t + g(t) \, \mathrm{d}\boldsymbol{w}_t,
\end{equation}
where \(\boldsymbol{f}: \mathbb{R}^d \times [0,T] \to \mathbb{R}^d\) is a Lipschitz-continuous drift function, \(g: [0,T] \to \mathbb{R}_{>0}\) is a continuous diffusion coefficient, and \(\boldsymbol{w}_t\) is a standard \(d\)-dimensional Wiener process.

The score function at time \(t\) is the gradient of the log-marginal density:
\begin{equation}
\boldsymbol{s}(\boldsymbol{x}, t) = \nabla_{\boldsymbol{x}} \log p_t(\boldsymbol{x}),
\end{equation}
where \(p_t(\boldsymbol{x})\) is the marginal density induced by the forward process at time \(t\).

The velocity field arises in the deterministic reformulation of the generative process via the probability flow ODE:
\begin{equation}
\frac{\mathrm{d}\boldsymbol{x}_t}{\mathrm{d}t} = \boldsymbol{v}(\boldsymbol{x}_t, t) = \boldsymbol{f}(\boldsymbol{x}_t, t) - \frac{1}{2} g(t)^2 \boldsymbol{s}(\boldsymbol{x}_t, t). \label{eq:velocity_score}
\end{equation}

\begin{theorem}
Velocity distillation differs from score distillation by a multiplicative factor $\frac{1}{2} g(t)^2$.
\end{theorem}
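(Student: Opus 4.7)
The plan is to start from the score distillation gradient, which is the direct expansion of $\nabla_{\theta}\mathbb{E}_{t}[D_{\mathrm{KL}}(p_{\theta}^{t}\,\|\,q_{t})]$ in Eq.~\ref{eq:kl_div}, namely
\begin{equation*}
\mathbb{E}_{t,\boldsymbol{z}',\boldsymbol{z}''}\Big[\big(\boldsymbol{s}_{\theta}(\boldsymbol{x}'_{t},t)-\boldsymbol{s}_{\mathrm{pretrain}}(\boldsymbol{x}'_{t},t)\big)\frac{\partial \boldsymbol{x}'_{t}}{\partial \theta}\Big],
\end{equation*}
where $\boldsymbol{s}_{\theta}=\nabla_{\boldsymbol{x}'_{t}}\log p_{\theta}^{t}$ and $\boldsymbol{s}_{\mathrm{pretrain}}=\nabla_{\boldsymbol{x}'_{t}}\log q_{t}$, and then rewrite both scores in terms of velocities using the probability flow ODE identity stated in Eq.~\ref{eq:velocity_score}. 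The core algebraic step is to invert that identity to obtain $\boldsymbol{s}(\boldsymbol{x},t)=\frac{2}{g(t)^{2}}\big(\boldsymbol{f}(\boldsymbol{x},t)-\boldsymbol{v}(\boldsymbol{x},t)\big)$, and apply it to both the student and the teacher.

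First, I would substitute this expression into the score difference and observe that, because the forward SDE drift $\boldsymbol{f}(\boldsymbol{x}'_{t},t)$ depends only on the ambient diffusion process and not on the model being scored, it cancels exactly, yielding
\begin{equation*}
\boldsymbol{s}_{\theta}(\boldsymbol{x}'_{t},t)-\boldsymbol{s}_{\mathrm{pretrain}}(\boldsymbol{x}'_{t},t)=\frac{2}{g(t)^{2}}\big(\boldsymbol{v}_{\mathrm{pretrain}}(\boldsymbol{x}'_{t},t)-\boldsymbol{v}_{\theta}(\boldsymbol{x}'_{t},t)\big).
\end{equation*}
Next, I would identify $\boldsymbol{v}_{\theta}$ with the student velocity $\boldsymbol{u}_{\theta}$ constructed in Eq.~\ref{eq:u_theta} (this is exactly the substitution used to derive Eq.~\ref{eq:loss_func2}), so that the score distillation gradient becomes
\begin{equation*}
\mathbb{E}_{t,\boldsymbol{z}',\boldsymbol{z}''}\Big[-\tfrac{2}{g(t)^{2}}\big(\boldsymbol{u}_{\theta}(\boldsymbol{x}'_{t},t)-\boldsymbol{v}_{\mathrm{pretrain}}(\boldsymbol{x}'_{t},t)\big)\frac{\partial \boldsymbol{x}'_{t}}{\partial \theta}\Big].
\end{equation*}

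Finally, I would compare this expression with the velocity distillation gradient in Eq.~\ref{eq:loss_func2}. The two integrands differ only by the scalar $\frac{2}{g(t)^{2}}$; equivalently, the VD gradient equals $\frac{1}{2}g(t)^{2}$ times the score distillation gradient (pointwise in $t$ and $\boldsymbol{x}'_{t}$), which is exactly the claim of the theorem. To make the statement fully rigorous inside the expectation, I would note that $g(t)$ depends only on $t$ and is strictly positive by assumption, so the factor can be pulled inside or outside the inner expectation without issue.

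The main obstacle is not algebraic but conceptual: one must justify that replacing $-\nabla_{\boldsymbol{x}'_{t}}\log p_{\theta}^{t}$ by $\boldsymbol{u}_{\theta}$ and $-\nabla_{\boldsymbol{x}'_{t}}\log q_{t}$ by $\boldsymbol{v}_{\mathrm{pretrain}}$ is consistent with the probability flow relation. The cleanest way to dispatch this is to invoke Eq.~\ref{eq:velocity_score} applied separately to the student-induced marginals $p_{\theta}^{t}$ and the teacher-induced marginals $q_{t}$ under the \emph{same} forward SDE (so the drift $\boldsymbol{f}$ and diffusion $g$ coincide for both), which is precisely the setting of Score-SDE and is the only ingredient needed to close the proof.
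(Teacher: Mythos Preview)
Your proposal is correct and follows essentially the same approach as the paper: both proofs use the probability flow identity of Eq.~\ref{eq:velocity_score} applied to the student and teacher velocities, observe that the common drift $\boldsymbol{f}$ cancels, and read off the scalar factor $\tfrac{1}{2}g(t)^{2}$. The only cosmetic difference is direction---the paper starts from $\nabla_{\theta}\mathcal{L}_{\mathrm{VD}}$ and substitutes velocities by scores to land on $\tfrac{1}{2}g(t)^{2}\nabla_{\theta}\mathcal{L}_{\mathrm{SD}}$, whereas you start from the score distillation gradient and substitute scores by velocities; the algebra and the key ingredients are identical.
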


\begin{proof}
We have
\begin{equation}\begin{aligned}
\nabla_\theta\mathcal{L}_{\mathrm{VD}}&(\theta \vert t) \\
&=\mathbb{E}_{\boldsymbol{z},\boldsymbol{z}'}\Big[ -\Big(\boldsymbol u_{\theta}(\boldsymbol x'_{t}, t) - \boldsymbol v_{\textrm{pretrain}}(\boldsymbol x'_{t}, t)  \Big)\frac{\partial \boldsymbol x'_t}{\partial\theta} \Big] \\
&=\mathbb{E}_{\boldsymbol{z},\boldsymbol{z}'}\Big[ -\Big(\boldsymbol f(\boldsymbol x'_{t}, t) - \frac{1}{2} g(t)^2 \boldsymbol{s}_{\theta}(\boldsymbol{x}_t, t) \\
&~~~~- \big(\boldsymbol f(\boldsymbol x'_{t}, t) - \frac{1}{2} g(t)^2 \boldsymbol{s}'_{\text{pretrain}}(\boldsymbol{x}'_t, t) \big)  \Big)\frac{\partial \boldsymbol x'_t}{\partial\theta} \Big]\\
&=\mathbb{E}_{\boldsymbol{z},\boldsymbol{z}'}\Big[ \Big( \frac{1}{2} g(t)^2 (\boldsymbol{s}_{\theta}(\boldsymbol{x}'_t, t) - \boldsymbol{s}_{\text{pretrain}}(\boldsymbol{x}'_t, t)) \Big)\frac{\partial \boldsymbol x'_t}{\partial\theta} \Big]\\
&= \frac{1}{2} g(t)^2 \nabla_\theta\mathcal{L}_{\mathrm{SD}}(\theta \vert t).
\end{aligned}\end{equation}
\end{proof}

This proof implies that, omitting a weight coefficient related to $t$, velocity distillation is equivalent to score distillation.
\section{More Results}

\begin{figure*}[h]
\centering
\includegraphics{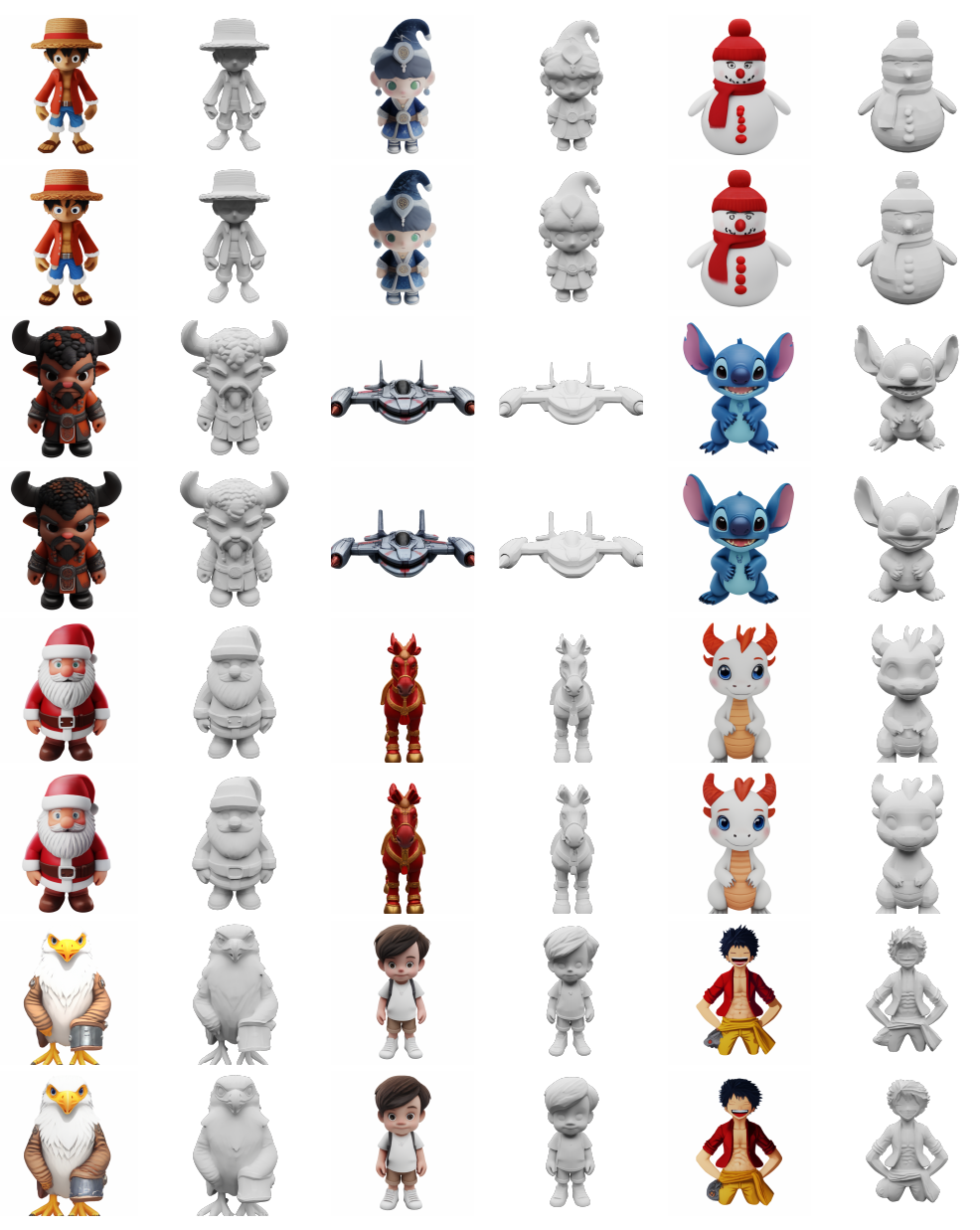}
\caption{Our 3D Gaussians~\cite{kerbl3Dgaussians} and mesh generation results. Odd rows and even rows represent samples from $2$ steps $\times 2$ and $1$ step $\times 2$ during inference, respectively.}
\label{fig:step2}
\end{figure*}

\begin{figure*}[h]
\centering
\includegraphics{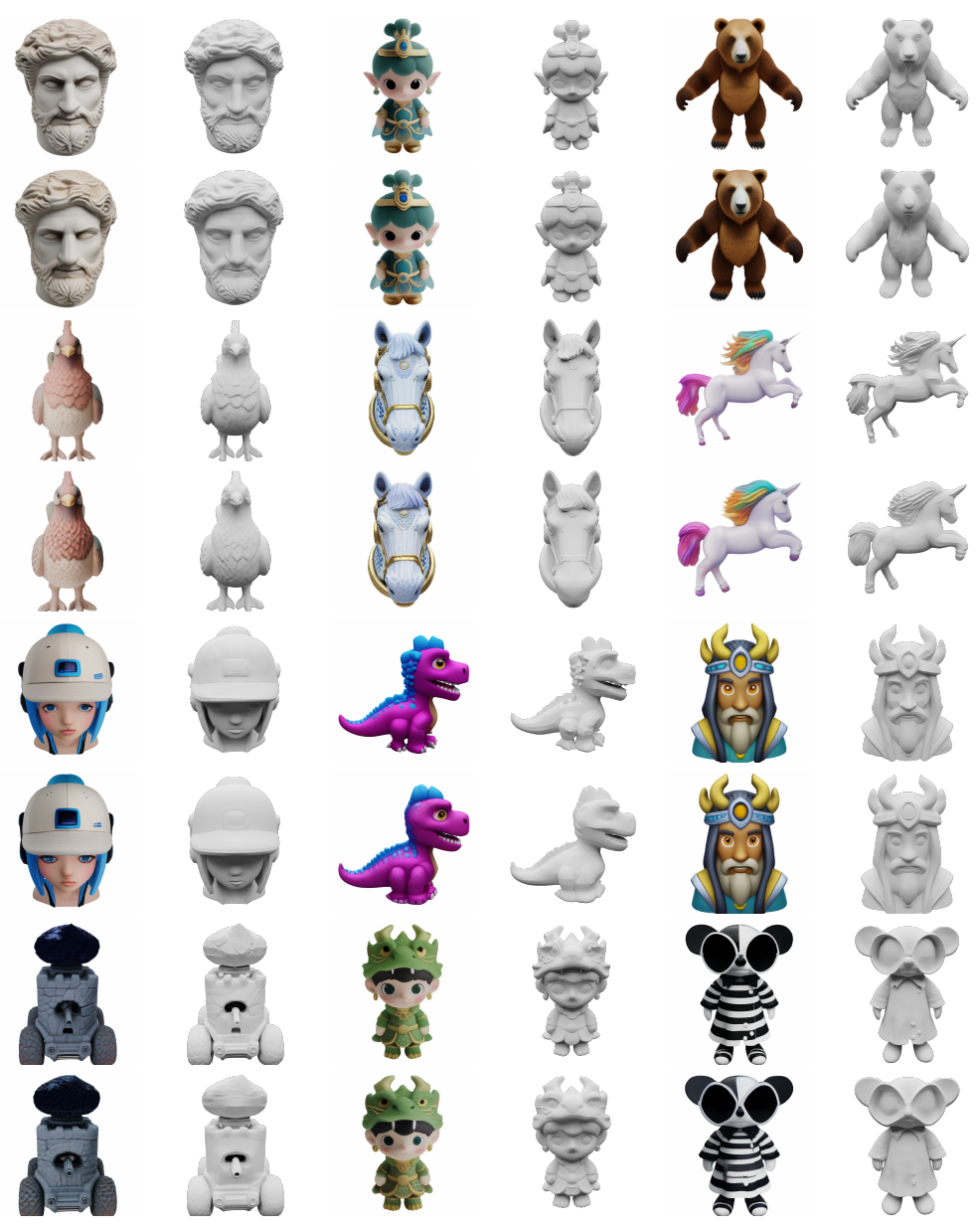}
\caption{Our 3D Gaussians~\cite{kerbl3Dgaussians} and mesh generation results. Odd rows and even rows represent samples from $2$ steps $\times 2$ and $1$ step $\times 2$ during inference, respectively.}
\label{fig:step1}
\end{figure*}

\end{document}